%% file: main.tex
\documentclass{article}                        
\usepackage[preprint]{neurips_2024}            

\usepackage[utf8]{inputenc}
\usepackage[T1]{fontenc}
\usepackage{hyperref}
\usepackage{url}
\usepackage{booktabs}
\usepackage{amsfonts}
\usepackage{amsmath}
\usepackage{amssymb}
\usepackage{amsthm}
\usepackage{graphicx}
\usepackage{nicefrac}
\usepackage{microtype}
\usepackage{xcolor}

\graphicspath{{figures/}}

\input{macros}

\title{When Can You Correct Distribution Drift in Temporal Graph
  Generation? A Sharpening--Drift Tension and an Impossibility for
  Observation-Based Correction}

\author{%
  Tianpeng Li\\
  College of New Media and Communication, Tianjin University\\
  \texttt{ltpnimeia@tju.edu.cn}
  \And
  Xuan Guo\\
  School of Artificial Intelligence, Tianjin University\\
  \texttt{guoxuan@tju.edu.cn}
  \And
  Wenjun Wang\\
  School of Artificial Intelligence, Tianjin University\\
  \texttt{wjwang@tju.edu.cn}
  \AND
  Wang Zhang\thanks{Corresponding author.}\\
  School of Artificial Intelligence, Tianjin University\\
  \texttt{wangzhang@tju.edu.cn}
  \And
  Pengfei Jiao\\
  School of Cyberspace Security,Hangzhou Dianzi University\\
  \texttt{pjiao@hdu.edu.cn}
}

\begin{document}

\maketitle

\begin{abstract}
Generative models of temporal graphs are trained on one stretch of an evolving network
and deployed on the next, and they degrade badly in the gap. We show this degradation
is derivable, general, and not fixable from observations. The masked flow-matching loss
decomposes exactly, with no independence assumption, into an irreducible entropy plus a
divergence whose derivative along the training path is positive precisely for
structures rare during training and common at deployment, diverging as their training
probability goes to zero. Empirically the trade-off is a power law with exponent
$-0.605$ ($R^2=0.9977$), and drift raises the sampler's error floor without changing
how many steps reach it: across seven well-powered conditions the drift-period marginal
error varies by at most $6\%$ over a $50\times$ range of sampling budgets, while the
floor sits $2.2\times$ to $34.3\times$ above the in-period floor. Because the
deployment period is observed, correction looks like a matter of measurement. It is
not. We prove that any corrector measurable with respect to past observations leaves at
least the conditional variance of the statistic it tracks, and that trend extrapolation
beats trusting the last observation only when $\mu^2>v(1-2\rho)$. Both premises are
measurable and both go the wrong way: the drift is trendless and mean-reverting, with a
one-step innovation as large as the drift itself. An oracle removes $60\%$ of the
error, the best observation-based corrector recovers $5.7\%$ of that, and extrapolation
is strictly worse than doing nothing clever.
\end{abstract}

\input{sections/01-intro}
\input{sections/02-related}
\input{sections/03-setup}
\input{sections/04-theorem-a}
\input{sections/05-universality}
\input{sections/06-theorem-b}
\input{sections/07-sufficiency}
\input{sections/08-discussion}
\input{sections/09-conclusion}

\bibliographystyle{plainnat}
\bibliography{refs}

\newpage
\input{sections/10-appendix}

\end{document}

%% file: macros.tex
\newcommand{\Ldfm}{\mathcal{L}_{\mathrm{DFM}}}
\newcommand{\KL}{\mathrm{KL}}
\newcommand{\TV}{\mathrm{TV}}
\newcommand{\Var}{\mathrm{Var}}
\newcommand{\Fcal}{\mathcal{F}}
\newcommand{\Ecal}{\mathbb{E}}
\newcommand{\Prob}{\mathbb{P}}
\newcommand{\qhat}{\hat{q}}
\newcommand{\phat}{\hat{p}}
\newcommand{\Ccal}{\mathcal{C}}
\newcommand{\Dcal}{\mathcal{D}}
\newcommand{\Ical}{\mathcal{I}}
\newcommand{\Hent}{H}
\newcommand{\Nmax}{N_{\max}}

\newcommand{\Ddrift}{\Dcal_{\mathrm{drift}}}   
\newcommand{\IQ}{\Ical_{Q}}                    
\newcommand{\Rsrc}{P}                          
\newcommand{\Rtgt}{Q}                          

\theoremstyle{plain}
\newtheorem{theorem}{Theorem}

\newtheorem{corollary}[theorem]{Corollary}
\newtheorem{lemma}[theorem]{Lemma}
\theoremstyle{definition}

\theoremstyle{remark}

%% file: sections/01-intro.tex
\section{Introduction}
\label{sec:intro}

Discrete diffusion and flow matching are now the default way to generate graphs
\citep{vignac2023digress,qin2024defog}, and the natural next step is to generate
\emph{temporal} graphs: train on a stretch of an evolving network, then generate the
structure that comes next. The difficulty is that ``next'' is not ``more of the same''.
Communication networks, course-enrolment logs and link-prediction benchmarks all shift
between the period a model trains on and the period it is deployed in, and a model
trained by maximum likelihood on the first has no mechanism for noticing.

The degradation is easy to underestimate. On our data a model whose in-period loss
improves by a factor of $7.8$ sees its loss on the \emph{following} period rise by a
factor of $3.4$ over the same interval. The two are not merely uncorrelated: past an
early inflection they are anti-correlated, with a rank correlation of $-1.000$ in every
seed and a power-law exponent of $-0.605$ ($R^2=0.9977$). Fitting the data you have
makes you worse at the data you will get.

\paragraph{The question, and the answer.}
Shift-induced degradation is not itself news. What makes this setting worth separate
treatment is that the shift appears \emph{observable}: at deployment we watch the
target period arrive, snapshot by snapshot, unlabelled but in full, so why not measure
how the distribution moved and re-aim the model? Our answer is that this cannot work.
An oracle seeing the current period's statistics fixes most of the damage --- we
measure a $60\%$ reduction --- but no corrector restricted to past observations
approaches it, and the barrier is information-theoretic rather than methodological. The
statistics a corrector needs behave like a fast mean-reverting random walk whose
one-step innovation is as large as the drift itself; what the past does not determine
about the present is most of what there is to correct.

\paragraph{Contributions.}
\begin{itemize}\itemsep1pt \parskip0pt
\item \textbf{A formal account of the degradation} (Section~\ref{sec:theorem-a}). The
  masked flow-matching loss decomposes exactly, with no independence assumption, into
  an irreducible entropy plus a divergence (Theorem~\ref{thm:decomp}); along the
  sharpening path training follows, the derivative of the drift-period divergence is
  $(q-p)(\tfrac12-p)/p(1-p)$ (Theorem~\ref{thm:antimono}), positive exactly for slots
  rare in training and common at deployment and divergent as their training probability
  vanishes. At the sampler, drift enters as a step-count-independent error floor
  (Theorem~\ref{thm:sampler}).

\item \textbf{The phenomenon is general} (Section~\ref{sec:universality}). Across seven
  well-powered conditions spanning four domains and four token resolutions, the
  drift-period marginal error is flat in the sampling budget ($\le6.0\%$ over a
  $50\times$ range) while the floor it reaches is $2.2\times$ to $34.3\times$ the
  in-period floor; the elevated floor reproduces on nine domains in total.

\item \textbf{An impossibility result for observation-based correction}
  (Section~\ref{sec:theorem-b}) --- our central contribution. Any corrector measurable
  with respect to past observations leaves at least the conditional variance of the
  target statistic (Theorem~\ref{thm:lb}), and trend extrapolation beats the last
  observation only when $\mu^2>v(1-2\rho)$ (Theorem~\ref{thm:thresh}). We measure both
  premises: the increments are mean-reverting ($\rho\in[-0.29,-0.11]$) and trendless,
  so the threshold is violated and extrapolation is strictly worse everywhere.

\item \textbf{Among realizable corrections, the cheapest is best}
  (Section~\ref{sec:sufficiency}). A training-free marginal anchor beats $200$ gradient
  steps of full test-time adaptation and stacking them is worse than either --- as
  Theorem~\ref{thm:antimono} predicts, since adaptation is training and training on a
  moving target re-sharpens in the wrong place.
\end{itemize}

Each section narrows the space of remedies and closes on the same fact. We regard the
negative result as the useful one: it says where effort should not go --- better
estimators of the observed past --- and where a remedy would have to come from, namely
side information outside the graph stream, or models that decline to sharpen.

%% file: sections/02-related.tex
\section{Related Work and Positioning}
\label{sec:related}

\paragraph{Discrete generative models for graphs.}
Denoising diffusion and flow matching over discrete state spaces are the standard
approach to graph generation: DiGress \citep{vignac2023digress} introduced discrete
denoising diffusion with a marginal-transition prior, DeFoG \citep{qin2024defog}
recast graph generation as discrete flow matching, and generative models of evolving
graphs --- TIGGER \citep{gupta2022tigger}, TagGen \citep{zhou2020taggen}, DYMOND
\citep{zeno2021dymond} --- target realism of the generated stream against held-out
statistics. We study the same family but not the same question: we analyse what
happens to \emph{any} masked or absorbing-state generator when deployment and training
periods differ, so we neither compete on generation quality nor need to. The mechanism
in Section~\ref{sec:theorem-a} follows from the objective's structure and applies to
these models as a class.

\paragraph{Theory of discrete diffusion --- and where it stops.}
Two recent lines bear on our technique. \citet{liang2025discrete} give
sampler-convergence guarantees for discrete diffusion by bounding the \emph{rate of
change} of a KL divergence with a differential inequality rather than the divergence
itself, avoiding change-of-measure arguments and their regularity conditions.
Theorem~\ref{thm:antimono} uses the same move for the same reason: a uniform bound on
the drift-period divergence is unavailable because the relevant log-ratio blows up on
rare slots, whereas the derivative along the sharpening path is well behaved.
\citet{feng2025diffusionlm} establish an efficiency--accuracy dichotomy for masked
diffusion models --- near-optimal token-level error in constant steps, but
sequence-level correctness possibly requiring steps growing with length --- and note
explicitly that their results cover the masked variant of discrete flow matching, so
our models fall in scope. Sharper in-distribution error bounds for masked diffusions
with factorised approximations appear in \citet{lavenant2025errorbounds}.

The gap we occupy is that \emph{all of this analysis is stationary}: it describes
convergence to, and sampling from, the distribution the model trained on.
Theorem~\ref{thm:sampler} is the non-stationary counterpart of the token/sequence
split --- the marginal term becomes a step-independent floor set by the drift, and the
joint term is governed by \emph{target-period} dependence the source-trained model
cannot train away. We do not claim few-step sampling breaks under drift; we measured it
and it does not (Section~\ref{sec:universality}). What changes is the floor it reaches.

\paragraph{Correcting temporal shift, and why we propose no method.}
Temporal domain generalisation anticipates future parameters by extrapolating their
trajectory, as DRAIN \citep{bai2022temporaldg} makes explicit; graph-specific work
addresses out-of-distribution generalisation for evolving graphs
\citep{sun2025evograph}. On the generative side, machinery for steering a discrete
model toward a specified distribution already exists: guidance for discrete
state-space models \citep{nisonoff2025guidance} and primal--dual guided decoding under
constraints \citep{tomasi2026primaldual}. This is exactly why our contribution is a
theorem and not a method: the mechanism for re-aiming a discrete generator is off the
shelf and the idea of extrapolating a temporal trend is already taken, so what is
missing is whether either can work here --- and Section~\ref{sec:theorem-b} answers
that negatively on measured grounds. Publishing the bound and the measurement seems
more useful than adding another corrector to a class we can show is capped.

%% file: sections/03-setup.tex
\section{Setup and Preliminaries}
\label{sec:setup}

\paragraph{Temporal graphs as token sequences.}
A temporal graph is observed as a sequence of snapshots. Around a centre node we
extract, at each snapshot, an ego-network of at most $\Nmax$ nodes and encode it as a
fixed-length binary token $z\in\{0,1\}^{D}$ with $D=\Nmax+\binom{\Nmax}{2}$: the first
$\Nmax$ slots indicate node presence, the rest the edges of the strictly
upper-triangular block. A history $z_{\le t}$ is summarised by an encoder into a
context $h=\mathrm{Enc}_\theta(z_{\le t})$, and the task is to generate the next token
given $h$. Everything below concerns a single such conditional model.

\paragraph{Source and target periods.}
Split the timeline chronologically: $\Rsrc$ is the law of $(z,h)$ on the \emph{source
period} the model trains on, $\Rtgt$ its law on a later \emph{target period} it is
deployed on, and $\Rsrc\ne\Rtgt$ is the drift we study. Write
$p_d(\cdot\mid\Ccal)$, $q_d(\cdot\mid\Ccal)$ for the true conditional law of slot $d$
under $\Rsrc$, $\Rtgt$, and $\phat_d(\cdot\mid\Ccal)$ for the model's. The
\emph{context} $\Ccal=(z^{\sigma(<k)},h)$ is a subset of already-revealed slots plus
the history --- an \emph{any-order} conditional, not a marginal --- a distinction that
is decisive in Sections~\ref{sec:theorem-a} and~\ref{sec:theorem-b}.

\paragraph{Masked discrete flow matching.}
We use the standard masked (absorbing-state) formulation
\citep{ou2024mdm,qin2024defog}: at flow time $s\in[0,1]$ each slot is revealed
independently with probability $s$ and otherwise replaced by $[\mathrm M]$, and the
denoiser predicts the masked slots from the revealed ones and $h$. With $R\subseteq[D]$
the revealed set and $\ell_d(R)=-\log\phat_d(z^d\mid z^R,h)$, training minimises
\begin{equation}
  \Ldfm(\theta)
  \;=\;
  \Ecal_{s\sim U[0,1]}\,\Ecal_{z_s}
  \Big[\tfrac{1}{1-s}\!\!\sum_{d:\,z_s^{(d)}=[\mathrm M]}\!\!\ell_d(R)\Big],
  \label{eq:ldfm}
\end{equation}
with $s$ drawn from $U[0,1-\varepsilon]$ in practice so the weight stays bounded. That
reweighting is what makes \eqref{eq:ldfm} interpretable, via an identity that is not
ours:

\begin{lemma}[any-order autoregressive form; \citealp{ou2024mdm}]
\label{lem:aoar}
With slots revealed independently as above and $s\sim U[0,1]$,
\begin{equation}
  \Ecal_{s}\,\Ecal_{z_s}
  \Big[\tfrac{1}{1-s}\!\!\sum_{d:\,z_s^{(d)}=[\mathrm M]}\!\!\ell_d(R)\Big]
  \;=\;
  \Ecal_{\sigma\sim\mathrm{Unif}(S_D)}\sum_{k=1}^{D}\ell_{\sigma(k)}\big(\sigma(<k)\big).
  \label{eq:aoar}
\end{equation}
\end{lemma}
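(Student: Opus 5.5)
Both sides can be rewritten as averages over revealed sets, after which the identity reduces to a Beta integral. Write $\ell_d(R)$ for the loss on slot $d$ given revealed set $R$, and note that it depends on $R$ only as a set.

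\textbf{Left-hand side.} With slots revealed independently, a fixed $R$ with $|R|=j$ has probability $s^{j}(1-s)^{D-j}$. The masked slots are exactly $d\notin R$, so
\begin{equation*}
\mathrm{LHS}=\sum_{R\subseteq[D]}\sum_{d\notin R}\ell_d(R)\int_0^1 s^{|R|}(1-s)^{D-|R|-1}\,ds .
\end{equation*}
For $d\notin R$ we have $|R|\le D-1$, so the exponent $D-|R|-1$ is nonnegative and the $\tfrac{1}{1-s}$ weight introduces no divergence. The integral is the Beta function
\begin{equation*}
B(j+1,D-j)=\frac{j!\,(D-j-1)!}{D!},\qquad j=|R|.
\end{equation*}

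\textbf{Right-hand side.} Expand the sum over $k$ and swap it with the expectation over $\sigma$. The pair $(\sigma(k),\sigma(<k))$ equals a given pair $(d,R)$ with $d\notin R$, $|R|=k-1$ exactly when three things hold: the first $k-1$ positions of $\sigma$ are some ordering of $R$, position $k$ is $d$, and the rest is arbitrary. There are $(k-1)!\,(D-k)!$ such permutations, so with $j=k-1$ the probability is
\begin{equation*}
\frac{j!\,(D-j-1)!}{D!}.
\end{equation*}
Therefore
\begin{equation*}
\mathrm{RHS}=\sum_{R\subseteq[D]}\sum_{d\notin R}\ell_d(R)\,\frac{|R|!\,(D-|R|-1)!}{D!}.
\end{equation*}

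\textbf{Conclusion and main obstacle.} The weights on the two sides agree term by term, which proves \eqref{eq:aoar}. The main care point is the Fubini/Tonelli interchange on the left, where the weight $1/(1-s)$ is unbounded near $s=1$. Every $\ell_d\ge0$ as a negative log-probability, so Tonelli applies. Each resulting integral is finite because a masked slot forces $|R|<D$, i.e. the factor $(1-s)$ appears at least once and cancels the weight. The practical truncation $s\sim U[0,1-\varepsilon]$ only perturbs the weights and does not affect the exact identity.
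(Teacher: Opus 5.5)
Your proof is correct: the Beta integral $\int_0^1 s^{j}(1-s)^{D-j-1}\,ds=j!\,(D-j-1)!/D!$ on the left and the count $(k-1)!\,(D-k)!$ of permutations with $\sigma(<k)=R$ (as a set) and $\sigma(k)=d$ on the right give the same weight to every pair $(d,R)$ with $d\notin R$.

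It is a different route from the paper's, which gives no derivation. The paper cites \citet{ou2024mdm} and checks the identity numerically in Appendix~\ref{app:lemma0}: it enumerates the right-hand side over all $24$ orderings at $D=4$ and compares it with a $2\times10^5$-draw Monte-Carlo estimate of the left, finding agreement within $2\%$. That check only confirms that the implementation's conventions match the formal statement at one small $D$.

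Your argument gives what the check cannot. It holds exactly for every $D$. It holds pointwise in $(z,h)$, hence under any evaluation law, including both $\Rsrc$ and $\Rtgt$ as Theorem~\ref{thm:decomp} requires. It also makes explicit which hypotheses matter: reveal probability $s$ paired with weight $1/(1-s)$, and $s$ uniform on all of $[0,1]$.

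There is a third hypothesis you use silently, when you pull $\ell_d(R)$ out of the $ds$-integral: the per-slot loss must not depend on $s$. This holds by the paper's definition of $\ell_d(R)$. But a time-conditioned denoiser would violate it, so it deserves an explicit sentence.

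Your closing remark is right that the lemma itself is untouched by truncation. Still, the objective actually trained with $s\sim U[0,1-\varepsilon]$ is not the plain any-order loss. It replaces the weights $j!\,(D-j-1)!/D!$ with $(1-\varepsilon)^{-1}\int_0^{1-\varepsilon}s^{j}(1-s)^{D-j-1}\,ds$, which depend on $j=|R|$. So downstream uses of the identity are exact only for the untruncated objective.
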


\noindent
The masked objective is thus exactly an autoregressive log-loss averaged over all $D!$
orderings. Every result here derives from the right-hand side, so we verify the
identity numerically on our own token family in Appendix~\ref{app:lemma0}.

\paragraph{Standing assumptions.}
We assume only that predicted and true conditionals are bounded away from the simplex
boundary, $\phat_d,p_d,q_d\in[\varepsilon,1-\varepsilon]$ --- automatic for a softmax
denoiser and for Laplace-smoothed empirical frequencies. No independence assumption
across slots is made anywhere.

%% file: sections/04-theorem-a.tex
\section{The Sharpening--Drift Tension}
\label{sec:theorem-a}

We show that the training loss decomposes exactly into an irreducible entropy and a
divergence (Theorem~\ref{thm:decomp}), that continuing to fit the source period
eventually \emph{increases} the divergence on the target period
(Theorem~\ref{thm:antimono}), and that at the sampler the drift enters as a
step-count-independent error floor (Theorem~\ref{thm:sampler}).

\begin{theorem}[exact decomposition]
\label{thm:decomp}
For any evaluation law $R\in\{\Rsrc,\Rtgt\}$,
\begin{equation}
\begin{aligned}
  \Ldfm(\theta;R)
  &\;=\;
  \underbrace{\Hent_R(z\mid h)}_{\text{model-independent}}
  \;+\;
  \underbrace{\Delta(\theta;R)}_{\ge 0},
  \\[2pt]
  \Delta(\theta;R)
  &\;=\;\Ecal_{\sigma}\sum_{k=1}^{D}\Ecal_{\Ccal\sim R}\,
  \KL\!\big(r_{\sigma(k)}(\cdot\mid\Ccal)\,\|\,\phat_{\sigma(k)}(\cdot\mid\Ccal)\big).
\end{aligned}
  \label{eq:decomp}
\end{equation}
\end{theorem}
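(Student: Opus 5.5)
We start from Lemma~\ref{lem:aoar}, which rewrites $\Ldfm(\theta;R)$ as $\Ecal_{\sigma}\sum_{k=1}^{D}\Ecal_{R}\,\ell_{\sigma(k)}(\sigma(<k))$, where the expectation is now taken over $(z,h)\sim R$. Here we read $\Ldfm(\theta;R)$ as \eqref{eq:ldfm} with $(z,h)$ drawn from $R$. The identity in the lemma holds pointwise in $(z,h)$, because it only averages over the masking randomness. Taking the outer expectation under either $\Rsrc$ or $\Rtgt$ therefore changes nothing. Once this is in place, everything concerns one fixed ordering $\sigma$ and one step $k$, with context $\Ccal=(z^{\sigma(<k)},h)$.

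For fixed $\sigma$ and $k$, write $d=\sigma(k)$ and condition on $\Ccal$. The tower property gives
\[
\Ecal_R\big[-\log\phat_d(z^d\mid\Ccal)\big]
=\Ecal_{\Ccal\sim R}\sum_{x\in\{0,1\}} r_d(x\mid\Ccal)\big(-\log\phat_d(x\mid\Ccal)\big).
\]
We then add and subtract $\log r_d(x\mid\Ccal)$ inside the sum. This splits the inner sum into two pieces: the conditional entropy $\Hent_R(z^d\mid\Ccal)$ and the divergence $\KL(r_d(\cdot\mid\Ccal)\,\|\,\phat_d(\cdot\mid\Ccal))$. The standing assumption that $\phat_d,r_d\in[\varepsilon,1-\varepsilon]$ makes every logarithm finite, so the split is legitimate and all expectations exist. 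Nonnegativity of $\Delta$ follows from Gibbs' inequality applied to each KL term.

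Next we sum the entropy terms over $k$ for fixed $\sigma$. By the chain rule for conditional entropy,
\[
\sum_{k=1}^{D}\Hent_R\big(z^{\sigma(k)}\mid z^{\sigma(<k)},h\big)=\Hent_R(z\mid h)
\]
for every ordering $\sigma$. This is the step where exactness without any independence assumption comes from. The chain rule holds for arbitrary joint laws, and the any-order contexts are true conditionals, not marginals. If we had used marginals instead, we would incur a total-correlation gap. Since this sum does not depend on $\sigma$, averaging over $\sigma\sim\mathrm{Unif}(S_D)$ leaves $\Hent_R(z\mid h)$ unchanged. This term involves only $R$ and not $\theta$, which is what makes it model-independent. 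The KL terms, averaged over $\sigma$ and summed over $k$, are exactly $\Delta(\theta;R)$.

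The main point to handle carefully is the bookkeeping in the first step rather than any inequality. Lemma~\ref{lem:aoar} is stated with the data law left implicit, so we must confirm two things. First, the lemma is an identity for each fixed $(z,h)$. Second, the $\varepsilon$-truncated schedule used in practice is set aside: the theorem concerns the $s\sim U[0,1]$ objective, where the weight $1/(1-s)$ is integrable against the masked-count expectation, since the expected number of masked slots is $D(1-s)$. With these two points checked, Fubini applies and lets us exchange the $R$-expectation with the $\sigma$-average and the sum over $k$. The remainder of the argument is the entropy chain rule.
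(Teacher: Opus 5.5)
Your proposal is correct and follows the paper's proof essentially step for step. You rewrite $\Ldfm(\theta;R)$ in any-order form via Lemma~\ref{lem:aoar}, split each $(\sigma,k,\Ccal)$ cross-entropy into conditional entropy plus KL, and use the chain rule to collapse the entropy sum to $\Hent_R(z\mid h)$ for every $\sigma$, so that $\Ecal_\sigma$ drops out; the extra bookkeeping you add (the lemma holding pointwise in $(z,h)$, and the $1/(1-s)$ weight being integrable against the $D(1-s)$ expected masked slots so Fubini applies) is sound and makes explicit what the paper leaves implicit.
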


\begin{proof}[Proof sketch]
Rewrite $\Ldfm$ in any-order form with Lemma~\ref{lem:aoar}; for fixed $\sigma,k$ the
inner expectation splits as entropy plus KL, and the entropy terms telescope by the
chain rule to $\Hent_R(z\mid h)$ \emph{for every fixed $\sigma$}, so $\Ecal_\sigma$
drops. Appendix~\ref{app:proofs}.
\end{proof}

Equation~\eqref{eq:decomp} uses no assumption beyond Lemma~\ref{lem:aoar}, in
particular no independence across slots. Two consequences follow. All of the model's
influence on the target-period loss sits in $\Delta(\theta;\Rtgt)$; and in the
nonparametric limit a model trained to optimality on the source period has
$\phat_d\to p_d$, so
\begin{equation}
  \Ldfm(\theta^\ast;\Rtgt)\;\longrightarrow\;
  \Hent_{\Rtgt}(z\mid h)+\Ddrift,
  \qquad
  \Ddrift:=\Ecal_\sigma\sum_{k}\Ecal_{\Ccal\sim \Rtgt}\,
  \KL\!\big(q_{\sigma(k)}\|p_{\sigma(k)}\big).
  \label{eq:floor}
\end{equation}
Fitting the source period perfectly converges not to the best achievable target-period
loss but to a strictly positive floor set by $\Ddrift$.

\subsection{Sharpening is anti-monotone in the drift period}

That a floor exists does not yet say training harder makes things worse. For that we
parameterise the training trajectory as a \emph{sharpening path} from the
uninformative prediction to the perfect source fit,
\begin{equation}
  m_d^{(\alpha)}(\cdot\mid\Ccal)\;=\;(1-\alpha)\cdot\tfrac12+\alpha\,p_d(\cdot\mid\Ccal),
  \qquad \alpha\in[0,1].
  \label{eq:path}
\end{equation}

\begin{theorem}[anti-monotonicity of sharpening]
\label{thm:antimono}
Write $p=p_d(1\mid\Ccal)$ and $q=q_d(1\mid\Ccal)$ for a Bernoulli slot. Then
\begin{equation}
  \frac{\mathrm{d}}{\mathrm{d}\alpha}\,\KL\!\big(q\,\big\|\,m^{(\alpha)}\big)
  \bigg|_{\alpha=1}
  \;=\;
  \frac{(q-p)\big(\tfrac12-p\big)}{p\,(1-p)}.
  \label{eq:antimono}
\end{equation}
Consequently the derivative is strictly positive iff
$\mathrm{sign}(q-p)=\mathrm{sign}(\tfrac12-p)$, and for fixed $q$ it diverges as
$q/(2p)$ as $p\to0$.
\end{theorem}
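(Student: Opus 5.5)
The plan is a direct one-variable computation. Theorem~\ref{thm:decomp} is not needed, since the statement concerns a single Bernoulli slot at a fixed context~$\Ccal$. Write $m(\alpha):=m_d^{(\alpha)}(1\mid\Ccal)=\tfrac12+\alpha\,(p-\tfrac12)$. The path~\eqref{eq:path} is affine in $\alpha$, so $m'(\alpha)=p-\tfrac12$ for every $\alpha$, and $m(1)=p$. The divergence to differentiate is
\[
  \KL\big(q\,\|\,m(\alpha)\big)=q\log\frac{q}{m(\alpha)}+(1-q)\log\frac{1-q}{1-m(\alpha)} .
\]
Its only $\alpha$-dependence is through $-q\log m-(1-q)\log(1-m)$.

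First I apply the chain rule:
\[
  \frac{\mathrm d}{\mathrm d\alpha}\KL\big(q\,\|\,m(\alpha)\big)=\Big(-\frac{q}{m}+\frac{1-q}{1-m}\Big)\,m'(\alpha).
\]
Over a common denominator the bracket is $(m-q)/\big(m(1-m)\big)$. Evaluating at $\alpha=1$ and substituting $m=p$ gives $\frac{p-q}{p(1-p)}\cdot(p-\tfrac12)$, which equals $\frac{(q-p)(\tfrac12-p)}{p(1-p)}$, i.e.\ \eqref{eq:antimono}. Differentiability on $[0,1]$, including the one-sided derivative at the endpoint $\alpha=1$, is justified by the standing assumption $p,q\in[\varepsilon,1-\varepsilon]$. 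This assumption keeps $m(\alpha)$ in $[\varepsilon,1-\varepsilon]$ along the whole path, because $m(\alpha)$ is a convex combination of $\tfrac12$ and $p$. The logarithms are therefore smooth there.

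Next, the sign claim. The denominator $p(1-p)$ is strictly positive, so the derivative is positive exactly when $(q-p)$ and $(\tfrac12-p)$ are both nonzero and of the same sign, i.e.\ $\mathrm{sign}(q-p)=\mathrm{sign}(\tfrac12-p)$. Then the asymptotics: for fixed $q$, as $p\to0$ the numerator tends to $q\cdot\tfrac12$ and the denominator behaves like $p$. The ratio of the expression to $q/(2p)$ is $\frac{(q-p)(1-2p)}{q(1-p)}\to1$, which is the claimed divergence rate (for $q>0$).

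There is no real obstacle here. The only point needing care is interpretive rather than computational. The limit $p\to0$ sits outside the standing box $[\varepsilon,1-\varepsilon]$, so I would state the asymptotic as a statement about the closed-form expression \eqref{eq:antimono}, read as $\varepsilon\to0$. I would not present it as a claim within the fixed-$\varepsilon$ regime.
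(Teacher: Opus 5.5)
Your proposal is correct and follows the paper's proof essentially step for step: $m'(\alpha)=p-\tfrac12$ is constant along the affine path, you apply the chain rule through the Bernoulli cross-entropy, evaluate at $m=p$, and read off the sign condition and the $q/(2p)$ asymptotic. Your ``both nonzero and of the same sign'' condition is slightly more careful than the stated iff. As written, the iff also admits the degenerate case $q=p=\tfrac12$, where both signs are $0$ but the derivative vanishes; your ``i.e.'' silently identifies the two conditions, so that case is worth noting explicitly.
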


\begin{proof}[Proof sketch]
$m'(\alpha)=p-\tfrac12$ is constant along \eqref{eq:path}; differentiate the Bernoulli
cross-entropy in $m$ and evaluate at $\alpha=1$. Appendix~\ref{app:proofs}.
\end{proof}

\paragraph{Reading the statement.}
Theorem~\ref{thm:antimono} is a \emph{differential} property, not a uniform lower
bound, deliberately: a uniform bound is unavailable because $\|\log(p/\phat)\|_\infty$
blows up precisely on the rare slots that matter. Differentiating along a path avoids
the blow-up while still answering the question --- \emph{does the next increment of
source-period fit help or hurt the target period?} This is the discrete-diffusion
analogue of bounding a KL divergence's rate of change rather than the divergence
itself, the technique by which the sampler-convergence literature avoids regularity
conditions \citep{liang2025discrete}.

The sign condition reads concretely: slots \emph{rare in the source period and common
in the target period} are where further sharpening hurts, unboundedly so as the source
probability vanishes. In our data the highest-drift slots all satisfy it --- one edge
slot moves from $p=0.0014$ to $q=0.185$, a derivative of $65.6$; another from
$p=0.042$ to $q=0.605$, derivative $6.4$. The model learns these as near-impossible
and deployment then makes them frequent: it is confidently, expensively wrong.

Figure~\ref{fig:tension}(a) shows this on a single training trajectory at five
checkpoints with three seeds. Past roughly $500$ steps every further reduction in
source-period loss buys an increase in target-period loss, and the trade is a clean
power law, $\mathcal L_{\Rtgt}\propto\mathcal L_{\Rsrc}^{-0.605}$ with $R^2=0.9977$ on
the seed means. All three seeds are pointwise monotone (Spearman $=-1.000$ each), so
this is no averaging artefact: halving the in-period loss multiplies the drift-period
loss by about $1.52$.

\begin{figure}[t]
  \centering
  \includegraphics{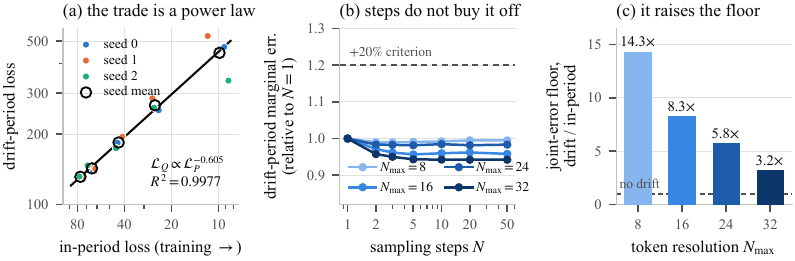}
  \caption{The sharpening--drift tension.
  \textbf{(a)} The trade is a power law: each point is one checkpoint
  ($250$--$4000$ steps) of one seed, the axis inverted so training progresses left to
  right; the fit is on the seed means (open circles) and all three seeds are
  individually monotone.
  \textbf{(b)} Drift-period marginal error against sampling steps, normalised to
  $N=1$, for four token resolutions on Enron; no curve comes within $6\%$ of the
  $\pm20\%$ flatness criterion, whose lower edge is therefore omitted.
  \textbf{(c)} The floor that sampling converges to, drift period relative to
  in-period, for the same four models.}
  \label{fig:tension}
\end{figure}

\subsection{At the sampler: drift enters the floor, not the step count}

Theorems~\ref{thm:decomp} and~\ref{thm:antimono} concern the likelihood, but a model is
used through a sampler revealing slots over $N$ steps, so the practical question is
whether the drift cost can be bought off with sampling compute. It cannot.

\begin{theorem}[sampler-level decomposition]
\label{thm:sampler}
Let the sampler reveal disjoint slot blocks $A_0,\dots,A_{N-1}$ and let
$P_{\rm gen}$ be the induced generative law. Then
\begin{equation}
  \Ddrift
  \;\le\;
  \KL\big(P_{\rm gen}\,\|\,\Rtgt\big)
  \;\le\;
  \Ddrift
  \;+\;
  \sum_{k=0}^{N-1}\Ecal_{\Ccal_k}\,\IQ\big(z^{A_k}\mid\Ccal_k\big),
  \label{eq:sampler}
\end{equation}
where $\IQ(z^{A}\mid\Ccal)=\KL\big(q(z^{A}\mid\Ccal)\,\|\,\prod_{d\in A}q_d(\cdot\mid\Ccal)\big)\ge0$
is the \emph{target-period} conditional total correlation within a block. The first
term does not depend on $N$; the second is non-increasing under refinement of the
schedule and vanishes at $N=D$.
\end{theorem}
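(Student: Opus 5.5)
The approach is a chain-rule computation over blocks. The KL divergence factorises along the sampler's revelation schedule. Within each block, a single algebraic identity then splits the cost into a per-slot drift part and a within-block dependence part. The two inequalities in \eqref{eq:sampler} should follow from identifying the first part with $\Ddrift$ and noting that the second part is nonnegative.

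\textbf{Setup.} Let $\Ccal_k=(z^{A_0\cup\dots\cup A_{k-1}},h)$ be the context available when block $A_k$ is revealed. The sampler draws the slots of $A_k$ independently from the model conditionals, so $P_{\rm gen}(z\mid h)=\prod_k\prod_{d\in A_k}\phat_d(z^d\mid\Ccal_k)$. In the regime that defines $\Ddrift$ (the nonparametric source-optimal fit of \eqref{eq:floor}) we have $\phat_d=p_d$. By the chain rule, $q(z\mid h)=\prod_k q(z^{A_k}\mid\Ccal_k)$. I would orient the divergence with the expectation taken under $\Rtgt$, as in $\Delta(\theta;\Rtgt)$ of Theorem~\ref{thm:decomp}, since that is how $\Ddrift$ is defined. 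The chain rule for KL then gives the exact identity
\begin{equation*}
\sum_{k=0}^{N-1}\Ecal_{\Ccal_k\sim\Rtgt}\,\KL\Big(q(z^{A_k}\mid\Ccal_k)\,\Big\|\,\prod_{d\in A_k}p_d(\cdot\mid\Ccal_k)\Big).
\end{equation*}

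\textbf{Within-block split.} Inside each term I would insert $\prod_{d\in A_k}q_d(\cdot\mid\Ccal_k)$ in the log-ratio:
\begin{equation*}
\log\frac{q(z^{A})}{\prod_d p_d}=\log\frac{q(z^{A})}{\prod_d q_d}+\sum_{d\in A}\log\frac{q_d}{p_d}.
\end{equation*}
Taking the expectation under $q(z^{A}\mid\Ccal_k)$ makes the first piece exactly $\IQ(z^{A_k}\mid\Ccal_k)$. The second piece is $\sum_{d\in A_k}\KL(q_d\|p_d)$, because the marginal of $q(z^A)$ on slot $d$ is $q_d$. This is an exact equality, not merely a bound. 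Summing over $k$, the divergence equals a per-slot drift sum plus the total-correlation sum. The upper bound in \eqref{eq:sampler} follows once the per-slot sum is identified with (or bounded by) $\Ddrift$. The lower bound follows from $\IQ\ge0$.

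\textbf{Refinement and $N=D$.} Split a block $A=B\cup C$ into $B$ followed by $C$. The chain rule for total correlation gives
\begin{equation*}
\IQ(z^{A}\mid\Ccal)=\IQ(z^B\mid\Ccal)+\Ecal\,\IQ(z^C\mid\Ccal,z^B)+\sum_{d\in C}I_Q(z^d;z^B\mid\Ccal).
\end{equation*}
The refined schedule pays only the first two terms, so the total-correlation sum is non-increasing under refinement. At $N=D$ every block is a singleton and $\IQ=0$.

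\textbf{Main obstacle.} The delicate step is matching the per-slot drift sum to $\Ddrift$ as literally defined. In $\Ddrift$, slot $\sigma(k)$ conditions on all earlier slots of the ordering. In the sampler, a slot in $A_k$ conditions only on earlier blocks, a coarser context. My first attempt is to take the schedule to be a uniform random permutation cut into consecutive blocks, and to read $\Ddrift$ as the drift functional evaluated at the contexts the schedule actually supplies. This makes the identification exact and manifestly $N$-independent in form. At $N=D$ it coincides with \eqref{eq:floor} through Lemma~\ref{lem:aoar}.

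If the stricter reading, with full any-order prefixes, is required, I would compare the two context $\sigma$-algebras directly. That comparison is not an instance of data processing, because $p_d$ and $q_d$ change jointly with the context. It therefore needs either an explicit convention on $\phat_d$ at coarse contexts, or a stated assumption. I would flag whichever convention is adopted.
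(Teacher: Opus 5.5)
Your route is the paper's: the chain rule over blocks, multiplying and dividing by $\prod_{d\in A}q_d$ to split each block into $\IQ$ plus per-slot divergences, $\IQ\ge0$ for the lower bound, the chain rule for total correlation for refinement, and singletons at $N=D$. Your three-term identity is a correct and more explicit version of the paper's one-line appeal. Your orientation is also the one the paper actually uses: its proof takes expectations under $q$, so it controls $\KL(\Rtgt\|P_{\rm gen})$, not the $\KL(P_{\rm gen}\|\Rtgt)$ written in the statement.

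The step you single out is a genuine gap, and the paper does not close it. It asserts that after averaging over reveal orders with Lemma~\ref{lem:aoar} the per-slot terms sum to $\Ddrift$ ``independently of the schedule''. But that lemma is about full any-order prefixes, and it applies only when slots are revealed one at a time. Your first fix does not close the gap either. A drift functional evaluated at the contexts the schedule supplies is a schedule-dependent number, and being $N$-independent ``in form'' is not what the theorem or Corollary~\ref{cor:floor} needs.

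In fact no fix exists without an extra hypothesis. By the chain rule for relative entropy, each fixed-order sum in \eqref{eq:floor} equals $\Ecal_h\KL(\Rtgt(\cdot\mid h)\|\Rsrc(\cdot\mid h))$. So $\Ddrift$ is just the joint divergence. It is attained at $N=D$ in any order, because then $P_{\rm gen}=\Rsrc$. For $N<D$ the sampler generally does not produce $\Rsrc$, and nothing ties $\KL(\Rtgt\|P_{\rm gen})$ to $\KL(\Rtgt\|\Rsrc)$ in either direction.

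Concretely, take $D=2$ and trivial $h$. Let $\Rtgt$ put mass $(1-\eta)/2$ on $00,11$ and $\eta/2$ on $01,10$, and let $\Rsrc$ be its mirror image; all conditionals lie in $[\eta,1-\eta]$. Then $\Ddrift=(1-2\eta)\log\frac{1-\eta}{\eta}\approx1.76$ at $\eta=0.1$. Both periods have $\mathrm{Bern}(1/2)$ marginals, so the $N=1$ sampler is uniform, giving $\KL(\Rtgt\|P_{\rm gen})\approx0.37$ and $\KL(P_{\rm gen}\|\Rtgt)\approx0.51$. The lower bound fails in either orientation.

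The upper bound breaks the same way. Take a shared copy structure ($z_2$ equals $z_1$ up to noise $0.01$ in both periods), with $z_1$-marginal $0.9$ under $\Rtgt$ and $0.1$ under $\Rsrc$. Then $\KL(\Rtgt\|P_{\rm gen})\approx3.7$ at $N=1$, against $\Ddrift+\IQ\approx2.0$.

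What your computation, and the paper's, actually proves is the identity
\[
  \KL(\Rtgt\|P_{\rm gen})=\sum_{k=0}^{N-1}\Ecal_{\Ccal_k}\Big[\sum_{d\in A_k}\KL\big(q_d(\cdot\mid\Ccal_k)\,\|\,p_d(\cdot\mid\Ccal_k)\big)+\IQ\big(z^{A_k}\mid\Ccal_k\big)\Big].
\]
Its first part depends on the schedule and equals $\Ddrift$ only at $N=D$. You should either state the theorem in this form, or name the hypothesis under which the first part is $N$-independent. Do not absorb the issue into a redefinition of $\Ddrift$.
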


\begin{proof}[Proof sketch]
Multiply and divide by $\prod_{d\in A}q_d$ inside the log to split each block's KL
into a total-correlation term plus per-slot drift KLs; substitute into the chain
decomposition and average over reveal orders. Each slot is revealed exactly once, so
the drift terms sum to $\Ddrift$ whatever the schedule; monotonicity in $N$ is the
chain rule for multi-information. Appendix~\ref{app:proofs}.
\end{proof}

The split in \eqref{eq:sampler} is a non-stationary counterpart of the
token-error/sequence-error dichotomy for masked diffusion models
\citep{feng2025diffusionlm}, with a sharp practical consequence.

\begin{corollary}[step-independent floor]
\label{cor:floor}
Any per-slot marginal functional of the generated distribution --- density, degree, and
the marginal-matching statistics generative-graph work reports --- sees only the
$\Ddrift$ term of \eqref{eq:sampler}. Its error floor is therefore non-decreasing in
$N$: additional sampling steps cannot improve marginal agreement with the target
period.
\end{corollary}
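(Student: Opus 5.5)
Corollary~\ref{cor:floor} follows from Theorem~\ref{thm:sampler} once we separate what a per-slot marginal functional can see from what the total-correlation term records. The plan has three steps.

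\textbf{Step 1: the marginal under any schedule.} Fix a functional $F$ that depends on $P_{\rm gen}$ only through the collection of one-slot marginals $\{P_{\rm gen}^{(d)}\}_{d\le D}$. Density and expected degree are linear in the node- and edge-presence slots, so they qualify.

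For each slot $d$, trace the sampler. Slot $d$ is revealed exactly once, in the unique block $A_k\ni d$. It is drawn from the factorised denoiser conditional $\phat_d(\cdot\mid\Ccal_k)$, which at the source optimum equals $p_d(\cdot\mid\Ccal_k)$. This is the same "each slot is revealed exactly once" fact that makes the drift terms sum to $\Ddrift$ in the proof of Theorem~\ref{thm:sampler}.

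Within-block dependence is never modelled, because the product $\prod_{d\in A_k}\phat_d$ is used. So the block structure can change the joint law of $z^{A_k}$, but it cannot change any single coordinate's law given $\Ccal_k$. The one-slot marginal is therefore $\Ecal_{\Ccal_k}p_d(\cdot\mid\Ccal_k)$ with the context drawn along the generative path. This is exactly the per-slot object whose discrepancy from $q_d$ is charged to $\Ddrift$.

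\textbf{Step 2: why $\IQ$ drops out.} Apply the split from Theorem~\ref{thm:sampler} block by block. For a functional of one-slot marginals, the $\IQ(z^{A_k}\mid\Ccal_k)$ term is invisible. It is a KL between $q(z^{A_k}\mid\Ccal_k)$ and the product of its own marginals, and those two laws have identical one-slot marginals. So the only component of the bound \eqref{eq:sampler} that any per-slot functional can register is the $\Ddrift$ term. Theorem~\ref{thm:sampler} states that this term does not depend on $N$.

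\textbf{Step 3: monotonicity in $N$.} The marginal error is governed by a quantity that is constant in $N$. The only $N$-dependent term, $\sum_k\Ecal\,\IQ$, is non-increasing under refinement, and it cannot lower the per-slot discrepancy because it never touched the marginals.

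Hence refining the schedule can only remove joint error, which $F$ does not see. The marginal error floor cannot decrease with $N$; it is non-decreasing, and in the idealised limit it is constant. Concretely, for any refinement $N\to N'$, I would show $F$-error$(N')\ge$ the $N$-independent floor determined by $\{\KL(q_d\|p_d)\}$.

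\textbf{Main obstacle.} The delicate point is Step 1. The context $\Ccal_k$ is itself generated by the model, so its law depends on the schedule. A slot's generated marginal is therefore not literally schedule-free unless the conditionals are consistent, that is, unless they come from one joint law, as they do at the nonparametric optimum $\phat=p$.

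I would handle this first by working in that limit. There, $\prod_k p(z^{A_k}\mid\Ccal_k)$ with factorised blocks has one-slot marginals equal to those of $P$ for every schedule: each slot is sampled from its exact conditional given the past, and marginalising gives $p_d$. The marginal error is then the fixed source-versus-target gap, independent of $N$.

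Outside the limit, I would fall back on the statement as a bound. The marginal-level contribution is bounded below via data processing on the projection $z\mapsto z^d$, which can only see the per-slot $\KL(q_d\|p_d)$ content of $\Ddrift$, and that quantity is unaffected by $N$.
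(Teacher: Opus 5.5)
Your Steps~2--3 follow the paper's route. The paper gives no separate proof and reads the corollary directly off the split in Theorem~\ref{thm:sampler}: $\IQ$ compares a law with the product of its own one-slot marginals, so per-slot functionals cannot see it, and the drift term is asserted to be $N$-independent.

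The gap is in the step you add to handle the obstacle you rightly flagged. It is false that at $\phat=p$ the block-factorised law has one-slot marginals $p_d$ for every schedule. The within-block dependence discarded at block $k$ changes the law of the context $\Ccal_{k'}$ of every later block, so your Step~1 claim holds only conditionally on $\Ccal_k$.

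Here is a counterexample. Take $D=3$, $z_1\sim\mathrm{Bern}(1/2)$, $z_2=z_1$ with probability $0.9$, and $z_3\sim\mathrm{Bern}(0.1)$ if $z_1=z_2$, $\mathrm{Bern}(0.9)$ otherwise. Every cell of the joint is positive, so the standing assumption holds, and $p_3(1)=0.18$.
\begin{itemize}
\item With exact conditionals, the generated marginal of slot~3 is $0.18$ at $N=1$ and at $N=3$.
\item Under $A_0=\{1,2\}$, $A_1=\{3\}$, the first two slots are drawn independently, so $\Prob(z_1=z_2)=1/2$ and slot~3 comes out at $0.5$.
\end{itemize}
Against a target with $q=(1/2,1/2,1/2)$, the marginal error on slot~3 drops from $0.32$ to $0$ when $N$ goes from $1$ to $2$, while slots~1 and~2 stay exact. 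So the inequality you promise in Step~3 fails as well.

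Your fallback does not rescue the argument, for two reasons.
\begin{itemize}
\item Data processing under $z\mapsto z^d$ gives $\KL(P^{(d)}_{\rm gen}\,\|\,\Rtgt^{(d)})\le\KL(P_{\rm gen}\,\|\,\Rtgt)$. That is an \emph{upper} bound on the marginal discrepancy, not a lower one.
\item The per-slot terms of $\Ddrift$ are conditional KLs. They can be strictly positive while source and target one-slot marginals coincide, so they cannot lower-bound a marginal error either.
\end{itemize}

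The obstacle is genuine. It is the same schedule-dependence of $\Ccal_k$ that the paper's ``each slot is revealed exactly once'' step passes over. It cannot be closed along your route, because the literal claim fails for intermediate schedules.

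What survives is narrower. In the nonparametric limit the two endpoints share the one-slot marginals $p_d(\cdot\mid h)$: $N=1$ gives the product of the $p_d(\cdot\mid h)$, and $N=D$ gives the source law itself. So sampling at full resolution buys no marginal improvement over one-shot sampling. Beyond that, the corollary is available only in the paper's term-level sense, with the $N$-independence taken from Theorem~\ref{thm:sampler} rather than derived.
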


Figure~\ref{fig:tension}(b) measures this: sweeping the sampler from $N=1$ to $N=50$
moves the drift-period marginal error by at most $6.0\%$ of its value, at every token
resolution. The floor itself is large --- Figure~\ref{fig:tension}(c) puts the
drift-period joint-error floor $3.2\times$ to $14.3\times$ above the in-period floor
on the same model.

One might expect drift to make few-step sampling untenable by inflating the second
term of \eqref{eq:sampler}. On these data it does not, and we return to that in
Section~\ref{sec:discussion}: \emph{few-step samplers survive drift; what they
converge to is simply worse.}

%% file: sections/05-universality.tex
\section{Universality of the Phenomenon}
\label{sec:universality}

Section~\ref{sec:theorem-a} derived the tension from an identity that assumes almost
nothing, so it should not be a property of one dataset or resolution. This section
reports how far it reaches and --- since the answer is not uniform --- where it weakens.

We sweep two axes: token resolution $\Nmax\in\{8,16,24,32\}$ on Enron, and three
further domains at $\Nmax=16$, each with $\approx1500$ held-out sequences
(Table~\ref{tab:universality} lists every condition). Two things hold without
exception. The drift-period marginal error is flat in the sampling budget --- over
$N\in[1,50]$ it moves by at most $6.0\%$ of its own value, confirming
Corollary~\ref{cor:floor} in $7/7$ conditions --- and the floor sampling converges to is
$2.2\times$ to $34.3\times$ higher in the drift period than in-period. The drift cost is
entirely a matter of \emph{where} sampling converges, not \emph{how fast}.

Five further domains (Contacts, Flights, SocialEvo, UNtrade, LastFM) carry only around
$75$ held-out sequences, enough to resolve a large effect only. On that effect they
agree: drift-period floors run $4.0\times$ to $47.6\times$ their in-period
counterparts. UNvote is excluded as degenerate --- its held-out split is identically
zero on the top-active slots, so the ratio is undefined rather than large. Counting
these, the elevated floor reproduces across nine distinct domains and twelve
conditions.

The Enron floor ratio \emph{falls} as $\Nmax$ grows,
$14.3\times\!\rightarrow\!3.2\times$. We flag this because the opposite is a tempting
inference from \eqref{eq:decomp}: $\Ddrift$ sums over $D$ slots, so one might expect
the penalty to accumulate with dimension. It does not, and
Theorem~\ref{thm:antimono} explains why --- the per-slot penalty is set by how far each
coordinate's distribution moved, not by how many coordinates there are, and raising
$\Nmax$ mostly adds sparse high-order edge slots that dilute the average. The drift
cost is a property of the shift, not of the dimension.

\paragraph{Where the drift actually lives.}
Which part of the distribution the damage occupies matters, because
Section~\ref{sec:theorem-b} turns on it. On Enron at $\Nmax=32$ the model drives its
in-period loss to $9.85$ nats, roughly a ninth of the $88.72$ that per-slot marginal
entropies alone would cost, the compression coming from inter-slot dependence and
history conditioning. In the drift period that same conditional structure costs it:
the loss reaches $445.77$ nats, about $2.8\times$ the $156.68$ that per-slot marginals
would predict, while measured drift in the unconditional marginals amounts to only
$42.2$ nats --- a small fraction of the $436$-nat gap between in-period and
drift-period loss. The drift is overwhelmingly a shift in \emph{conditional}
structure; the unconditional marginals --- the only part deployment-time observation
can readily estimate --- are a small share of it, and Section~\ref{sec:theorem-b}
shows that even that share cannot be tracked.

\paragraph{A second model family, and an honest limit.}
Section~\ref{sec:theorem-a} applies to any masked or absorbing-state generative model,
so we probed a second family: DiGress \citep{vignac2023digress}, a discrete denoising
diffusion model with a marginal-transition prior, trained on the same tokens for $32$k
steps. The direction reproduces --- its in-period loss falls from $187.4$ to $145.3$
nats while its drift-period loss stays flat ($312.8$ to $302.0$), so the
drift-to-in-period ratio climbs monotonically from $1.67$ to $2.08$ and no in-period
progress buys any drift-period progress.

The power law does not reproduce, however, and our pre-registered criterion for this
experiment returned FAIL ($R^2=0.30$). The likely reason is itself consistent with
Theorem~\ref{thm:antimono}: DiGress barely sharpens. Its in-period loss improves by a
factor of $1.29$ over $32$k steps where ours improves by $7.8$ over $4$k, its
marginal-transition prior braking exactly the confidence growth \eqref{eq:antimono}
makes costly. A model that does not sharpen should show a muted tension, and it does.
We report this as directional corroboration only --- one seed and a failed criterion,
not a second power law --- and note it suggests the sharpening rate, not the model
family, is the operative variable.

%% file: sections/06-theorem-b.tex
\section{Impossibility of Observation-Based Correction}
\label{sec:theorem-b}

The obvious response to Sections~\ref{sec:theorem-a} and~\ref{sec:universality} is to
correct the drift: at deployment we do see target-period snapshots, unlabelled but in
full, so why not measure how the distribution moved and re-aim the generator? This
section shows why that cannot work. The reason is not that the correctors we tried
were too weak; it is that the required information is not in the observations.

\paragraph{Setting.}
Fix a slot $d$ and let $q_t:=\Prob_{\Rtgt}(z_d=1\mid \text{time } t)$ be its
target-period marginal, a stochastic process adapted to the observation filtration
$\Fcal_t=\sigma(\text{target snapshots observed through } t)$. The source-trained
generator emits a fixed $p_d$. An \emph{observation-based corrector} forms $\qhat_t$
from strictly past observations --- formally, $\qhat_t$ is $\Fcal_{t-1}$-measurable ---
and re-aims the sampler's per-slot marginal onto it; the unattainable \emph{oracle}
sets $\qhat_t=q_t$. By Theorem~\ref{thm:decomp} the quantity such a correction moves
is $\KL(q_t\|\qhat_t)$.

\begin{theorem}[optimal observation-based correction leaves the conditional variance]
\label{thm:lb}
For any $\Fcal_{t-1}$-measurable $\qhat_t$,
$\Ecal[(q_t-\qhat_t)^2]\ge\Ecal[\Var(q_t\mid\Fcal_{t-1})]$, with equality iff
$\qhat_t=\Ecal[q_t\mid\Fcal_{t-1}]$ a.s. Consequently
\begin{equation}
  \Ecal\big[\KL(q_t\|\qhat_t)\big]\;\ge\;2\,\Ecal\big[\Var(q_t\mid\Fcal_{t-1})\big],
  \label{eq:kllb}
\end{equation}
while the oracle attains $0$.
\end{theorem}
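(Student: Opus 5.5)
The plan has two parts: an $L^2$ projection argument for the squared-error bound, then a pointwise Pinsker-type inequality to pass from squared error to KL.

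\textbf{Squared-error bound.} Set $m_t:=\Ecal[q_t\mid\Fcal_{t-1}]$, which is well defined because $q_t\in[\varepsilon,1-\varepsilon]$ is bounded. Expand
\[
(q_t-\qhat_t)^2=(q_t-m_t)^2+2(q_t-m_t)(m_t-\qhat_t)+(m_t-\qhat_t)^2 .
\]
Both $m_t-\qhat_t$ and $m_t$ are $\Fcal_{t-1}$-measurable, so conditioning on $\Fcal_{t-1}$ kills the cross term by the tower property. Taking expectations then gives
\[
\Ecal[(q_t-\qhat_t)^2]=\Ecal[\Var(q_t\mid\Fcal_{t-1})]+\Ecal[(m_t-\qhat_t)^2].
\]
The last term is nonnegative, which proves the inequality. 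Equality holds iff this term vanishes, i.e.\ iff $\qhat_t=m_t$ almost surely. The step is routine; the only care needed is integrability, which boundedness supplies.

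\textbf{Passage to KL.} Next we use Pinsker's inequality in its Bernoulli form, $\KL(\mathrm{Ber}(a)\|\mathrm{Ber}(b))\ge 2(a-b)^2$. This is just $\KL\ge 2\,\TV^2$ with $\TV=|a-b|$ for two-point laws. Applying it pointwise with $a=q_t$ and $b=\qhat_t$, then taking expectations and combining with the first part, yields \eqref{eq:kllb}.

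Two points need checking. First, the KL is finite: this follows if we restrict correctors to $[\varepsilon,1-\varepsilon]$, consistent with the standing assumptions. Second, the lower bound is unaffected if an arbitrary corrector is clipped. If $\qhat_t\notin(0,1)$, the KL may be $+\infty$, and the inequality then holds trivially. Finally, the oracle sets $\qhat_t=q_t$, so $\KL(q_t\|q_t)=0$ pointwise.

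\textbf{Main obstacle.} The main obstacle is interpretive rather than computational. It lies in stating precisely that $\KL(q_t\|\qhat_t)$ denotes the per-slot Bernoulli divergence that Theorem~\ref{thm:decomp} identifies as the quantity moved. It also requires confirming that Pinsker's constant $2$ applies to the divergence in the direction $\KL(q\|\qhat)$. Pinsker is symmetric in constant, so the direction causes no difficulty.
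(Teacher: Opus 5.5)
Your proposal is correct and follows the paper's proof essentially step for step. You expand around $m_t=\Ecal[q_t\mid\Fcal_{t-1}]$, kill the cross term by the tower property, and then apply the Bernoulli form of Pinsker, $\KL(q_t\|\qhat_t)\ge 2(q_t-\qhat_t)^2$, in expectation. Your extra remarks on integrability and on correctors at the boundary of $[0,1]$ (where the KL is infinite and the bound trivial) are harmless refinements that the paper leaves implicit.
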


\begin{proof}[Proof sketch]
Expand around $m_t=\Ecal[q_t\mid\Fcal_{t-1}]$; the cross term vanishes because
$m_t-\qhat_t$ is $\Fcal_{t-1}$-measurable and $\Ecal[q_t-m_t\mid\Fcal_{t-1}]=0$, so
$\Ecal[(q_t-\qhat_t)^2]=\Ecal[\Var(q_t\mid\Fcal_{t-1})]+\Ecal[(m_t-\qhat_t)^2]$.
Pinsker on a Bernoulli slot gives \eqref{eq:kllb}. Appendix~\ref{app:proofs}.
\end{proof}

Theorem~\ref{thm:lb} is deliberately unglamorous --- it is the $L^2$ optimality of
conditional expectation --- and that is the point. The residual is a property of the
process, not of the method: whatever the past does not determine about $q_t$ survives
\emph{every} observation-based correction, and only the oracle reaches zero. No
distributional assumption is used.

Two natural correctors make the question measurable: \emph{persistence}
$\qhat_t^{P}=q_{t-1}$, and \emph{extrapolation} $\qhat_t^{E}=2q_{t-1}-q_{t-2}$. Write
$\Delta_t=q_t-q_{t-1}$, $\mu=\Ecal[\Delta_t]$, $v=\Var(\Delta_t)$ and
$\rho=\mathrm{Corr}(\Delta_t,\Delta_{t-1})$.

\begin{theorem}[exact threshold for extrapolation]
\label{thm:thresh}
If the increment process is second-order stationary, then
$\mathrm{MSE}(P)=v+\mu^2$ and $\mathrm{MSE}(E)=2v(1-\rho)$, so
\begin{equation}
  \text{extrapolation beats persistence}\iff \mu^2>v\,(1-2\rho).
  \label{eq:thresh}
\end{equation}
\end{theorem}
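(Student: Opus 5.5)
The proof is a direct second-moment computation that expresses each corrector's error in terms of the increments $\Delta_t$.

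For persistence, $q_t-\qhat_t^{P}=q_t-q_{t-1}=\Delta_t$. Hence $\mathrm{MSE}(P)=\Ecal[\Delta_t^2]=\Var(\Delta_t)+(\Ecal\Delta_t)^2=v+\mu^2$.

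For extrapolation, $q_t-\qhat_t^{E}=q_t-2q_{t-1}+q_{t-2}=\Delta_t-\Delta_{t-1}$. The second difference has mean $\mu-\mu=0$, so the mean term cancels and
\begin{equation*}
\mathrm{MSE}(E)=\Var(\Delta_t-\Delta_{t-1})=\Var(\Delta_t)+\Var(\Delta_{t-1})-2\,\mathrm{Cov}(\Delta_t,\Delta_{t-1}).
\end{equation*}

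Second-order stationarity is used at exactly this point. It gives a common mean $\mu$, a common variance $v$ at lags $t$ and $t-1$, and a lag-one covariance equal to $\rho v$. Substituting, $\mathrm{MSE}(E)=2v-2\rho v=2v(1-\rho)$.

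Comparing the two, extrapolation beats persistence iff $2v(1-\rho)<v+\mu^2$. This is equivalent to $\mu^2>2v-2\rho v-v=v(1-2\rho)$, which is \eqref{eq:thresh}. Here "beats" means strictly smaller MSE, so the inequality is strict and ties go to persistence.

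There is no real obstacle; the only care needed is bookkeeping. The mean $\mu$ must be kept in $\mathrm{MSE}(P)$, since it enters through the bias term. It cancels in $\mathrm{MSE}(E)$ because the second difference has zero mean under the constant-mean part of stationarity. We must also state that $\rho$ is defined via the covariance normalised by $v$, which needs equal variances at both lags. If $v=0$, $\rho$ is undefined; in that case $\mathrm{MSE}(E)=0$ and $\mathrm{MSE}(P)=\mu^2$, and I would remark on this degenerate case separately.

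No filtration argument or earlier theorem is needed, though it is worth noting the following. Both correctors are $\Fcal_{t-1}$-measurable whenever $q_{t-1}$ and $q_{t-2}$ are observed. Theorem~\ref{thm:lb} therefore bounds both MSEs below by $\Ecal[\Var(q_t\mid\Fcal_{t-1})]$.
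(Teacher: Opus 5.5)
Your proof is correct and follows the paper's argument step for step: the persistence error is $\Delta_t$, giving $v+\mu^2$; the extrapolation error is the zero-mean second difference $\Delta_t-\Delta_{t-1}$, with variance $2v(1-\rho)$ under second-order stationarity; and comparing the two gives the threshold. Your extra remarks on the strict inequality and the degenerate case $v=0$ are sound, and the paper does not make them.
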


\begin{proof}[Proof sketch]
The persistence error is $\Delta_t$ and the extrapolation error is
$\Delta_t-\Delta_{t-1}$, of mean zero and variance $2v(1-\rho)$; compare.
Appendix~\ref{app:proofs}.
\end{proof}

Two special cases turn \eqref{eq:thresh} into a protocol. Under a \emph{martingale}
($\mu=\rho=0$), persistence is exactly the optimal predictor of Theorem~\ref{thm:lb},
its residual is the one-step innovation variance, and extrapolation is exactly
$2\times$ worse. Under \emph{mean reversion} ($\rho<0$, $\mu\approx0$),
$\mathrm{MSE}(E)=2v(1-\rho)>2v$: worse still, and worse as $|\rho|$ grows. Only real
momentum, $\rho>1/2$, or a strong trend flips \eqref{eq:thresh}. We confirmed both
theorems' algebra against a $2\times10^5$-step Monte-Carlo simulation.

\begin{figure}[t]
  \centering
  \includegraphics{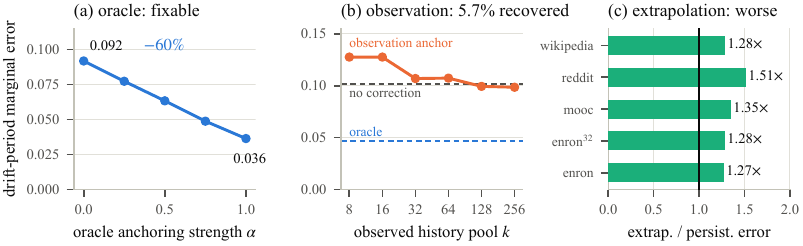}
  \caption{The impossibility, measured. \textbf{(a)} An oracle knowing the current
  target marginal removes $60\%$ of the drift-period marginal error --- the quantity
  \emph{is} correctable in principle. \textbf{(b)} A corrector restricted to observed
  history recovers $5.7\%$ of that, and enlarging the history pool $k$ does not close
  the gap. \textbf{(c)} Extrapolating the observed trend is strictly worse than
  trusting the last observation, in every condition. Panels (a) and (b) are separate
  runs starting from slightly different uncorrected errors ($0.092$, $0.102$); each
  carries its own reference levels.}
  \label{fig:impossibility}
\end{figure}

\paragraph{The oracle attains the bound (Fig.~\ref{fig:impossibility}a).}
Anchoring per-slot marginals to the \emph{true} target values, with strength $\alpha$
swept from $0$ to $1$, drops the drift-period marginal error from $0.092$ to $0.037$
($-60\%$), and the joint error falls by $50\%$ rather than degrading, so the correction
does not trade marginal accuracy against structure. The target of Theorem~\ref{thm:lb}
is reachable when $\qhat_t=q_t$.

\paragraph{Observation-based correction hits the bound (Fig.~\ref{fig:impossibility}b).}
Estimating the same marginals from the $k$ most recent observed snapshots recovers only
$5.7\%$ of the oracle's gain at $k=256$, and the curve is flat in $k$: more history does
not help. The magnitude checks out against Theorem~\ref{thm:lb} --- the one-step
innovation, measured as the mean absolute change in per-slot marginals between adjacent
windows, is $0.069$ against a total uncorrected error of $0.102$, so the unpredictable
part is most of the drift and \eqref{eq:kllb} leaves almost nothing to recover.
\emph{The bound is tight and the corrector is sitting on it.}

\paragraph{The premise of \eqref{eq:thresh} is measurable, and adverse
(Fig.~\ref{fig:impossibility}c).}
We measured its inputs directly from token marginals, with no model in the loop
(per-condition values in Table~\ref{tab:directionality}). Across five conditions the
increments are mean-reverting, $\rho\in[-0.29,-0.11]$, and trendless: consecutive
increments keep their sign only $0.12$ to $0.28$ of the time, far below the $0.5$ of a
coin flip, so $\mu\approx0$. The right-hand side of \eqref{eq:thresh} then exceeds $v$
while the left-hand side is $\approx0$: the condition is violated and extrapolation
must be strictly worse. It is, in all five conditions, by $1.27\times$ to $1.51\times$.
Theorem~\ref{thm:thresh} predicts $2(1-\rho)\in[2.2,2.6]$ in squared error, roughly
$1.5$--$1.6\times$ in absolute error; the measurement sits at or just below that, the
gap attributable to clipping extrapolated values into $[0,1]$ (which asymmetrically
helps extrapolation), the $L^1$ versus $L^2$ conversion, and finite windows. The robust
prediction --- extrapolation is worse, and worse as $-\rho$ grows --- holds without
exception and across window counts (Appendix~\ref{app:tsweep}).

\paragraph{Together.}
The oracle reaches the top of the range; the best observation-based corrector is pinned
near the bottom by Theorem~\ref{thm:lb}; and the one manoeuvre that could have escaped
the bound --- looking ahead by extrapolating a trend --- is ruled out by
Theorem~\ref{thm:thresh} because the trend does not exist. All three close on the same
fact: \emph{the drift is fast, mean-reverting, high-dimensional noise}, whose one-step
innovation is the size of the drift itself, which simultaneously makes
\eqref{eq:kllb} nearly maximal and \eqref{eq:thresh} violated.

\paragraph{Scope, stated precisely.}
Theorem~\ref{thm:lb} lower-bounds the \emph{marginal} part only. What a corrector
estimates is the \emph{unconditional} per-slot marginal, whereas the drift in
Theorem~\ref{thm:sampler} also lives in conditional structure such a correction leaves
untouched. This makes \eqref{eq:kllb} a valid lower bound on the \emph{total} residual
--- an inequality, not an identity --- and strengthens the conclusion: even the most
easily estimated component cannot be tracked. We never slide between the two senses of
``marginal''. Two further limits, that Theorem~\ref{thm:thresh} assumes second-order
stationarity and that we measure $\rho$ pooled rather than per-slot, are stated in
Section~\ref{sec:discussion}.

This is therefore an impossibility \emph{conditional on measurable properties of the
drift}. Theorem~\ref{thm:thresh} is satisfiable --- momentum $\rho>1/2$ would flip it
--- and Theorem~\ref{thm:lb} is vacuous at zero conditional variance, correctly
recovering the stationary case. The content is that real temporal graph streams land
on the wrong side of both.

%% file: sections/07-sufficiency.tex
\section{Minimal Sufficiency Among Realizable Corrections}
\label{sec:sufficiency}

Section~\ref{sec:theorem-b} bounds what \emph{any} observation-based corrector can do.
A natural objection is that re-aiming per-slot marginals is the weakest thing one could
try and that real compute would do better. It does not, and spending more can hurt. We
compare four arms on the same drift period: \emph{no correction}, the source-trained
generator as-is; \emph{marginal anchor}, the corrector of Section~\ref{sec:theorem-b},
which estimates per-slot marginals from observed history and re-aims the sampler at
zero gradient steps; \emph{full TTA}, which instead adapts \emph{all} parameters on
unlabelled target-period data for $200$ gradient steps; and \emph{TTA $+$ anchor}.

Against an uncorrected drift-period marginal error of $0.102$, the training-free
anchor reaches $0.088$ while $200$ steps of full test-time adaptation reach only
$0.097$: the anchor attains $0.913\times$ the error of TTA at none of its cost.
Stacking is worse still, at $0.098$ --- $1.112\times$ the anchor's error --- and the
joint error shows the same ordering more sharply ($0.413$ for the anchor against
$0.476$ stacked, a $15\%$ degradation).

Theorem~\ref{thm:antimono} accounts for the reversal. Test-time adaptation is
training, and training on the small pool of observed target data sharpens the model on
that pool --- precisely the operation \eqref{eq:antimono} penalises when the
deployment distribution keeps moving, which by Section~\ref{sec:theorem-b} it does,
fast and mean-revertingly. The adapted model is better aimed at where the target
period \emph{was} during the adaptation window and worse aimed at where it goes next;
adding the anchor on top inherits that mis-aim rather than undoing it.

This answers the practical form of the question Theorem~\ref{thm:lb} leaves open.
Within the realizable class, the minimal correction is at least as good as
full-parameter adaptation costing $200$ gradient steps, and combining them is
counterproductive. Every arm lands in the same narrow band, well short of the $0.047$
an oracle reaches from the same $0.102$ starting point
(Figure~\ref{fig:impossibility}b) --- which is what a binding bound looks like from
the inside.

%% file: sections/08-discussion.tex
\section{Discussion and Limitations}
\label{sec:discussion}

\paragraph{Scope of the evidence.}
Theorems~\ref{thm:decomp}--\ref{thm:sampler} and~\ref{thm:lb}--\ref{thm:thresh} follow
from the structure of the masked objective and the measurability of the corrector, and
refer to no particular tokenisation, encoder, or model family; the experiments,
however, run predominantly on one carrier, and the second-family probe
(Section~\ref{sec:universality}) is a single seed that missed its pre-registered
criterion. The empirical claim covers resolutions and domains, not architectures. Two
limits deferred from Section~\ref{sec:theorem-b} also belong here:
Theorem~\ref{thm:thresh} assumes second-order stationarity, true only approximately in
finite windows, so we treat its ratio as a same-order comparison and rest the argument
on the sign; and we measure $\rho$ pooled over active slots, inferring $\mu\approx0$
from sign persistence, whereas \eqref{eq:thresh} is per-slot.

\paragraph{The oracle is a yardstick, not a method.}
Figure~\ref{fig:impossibility}(a) is not a proposal. Its role is to show that the
$60\%$ no observation-based corrector reaches \emph{is} reachable with the right
information; without it the flatness in Figure~\ref{fig:impossibility}(b) would be
equally consistent with the error being irreducible, and the impossibility would have
no teeth.

\paragraph{Home-grown error metrics.}
The marginal and pairwise-joint errors are our own constructions
(Appendix~\ref{app:metrics}), so we state claims as relative magnitudes --- a floor
ratio, a fraction of the oracle's gain recovered, an extrapolation-to-persistence ratio
--- never as absolute scores comparable across papers. Two earlier versions of our
evaluation carried outright defects we found and fixed: a spectral statistic constant
by construction, and an Erd\H{o}s--R\'enyi control matched so tightly the metric it fed
was mathematically unsatisfiable. That history is the reason for the policy.

\paragraph{Measurement window, and criteria we amended.}
Appendix~\ref{app:tsweep} sweeps the number of chronological windows used to estimate
the drift process, and Appendix~\ref{app:protocol} records three pre-registered criteria
that had design defects and which we amended after seeing results. What carries
Theorem~\ref{thm:thresh} is robust across the sweep: increment autocorrelation never
approaches the $\rho>1/2$ the threshold requires (maximum $+0.22$), sign persistence
never approaches chance (maximum $0.33$), and extrapolation never beats persistence on
the deployment period outside the coarsest binning. But at the finest binning ($T=50$)
the pre-registered directionality flag flips in four of five conditions --- on the
full-span axis, contradicting the deployment-period measurement in the same conditions,
where extrapolation is $1.24$--$1.53\times$ worse. We read that as a binning artefact
at window counts finer than the tokenised blocks' resolution, and print the whole sweep
so a reader can weigh it.

\paragraph{A conjecture of ours that the data refuted.}
We predicted from Theorem~\ref{thm:sampler} that drift would inflate the target-period
dependence $\IQ$ and make few-step sampling untenable under shift --- the
non-stationary analogue of the sequence-level hardness of \citet{feng2025diffusionlm}.
It does not: in all seven well-powered conditions the drift-period joint error is no
more step-sensitive than the in-period one. We state this as a refuted prediction
rather than quietly narrowing the theorem, and the resulting picture is friendlier:
few-step samplers are safe under drift, only their destination is worse.

\paragraph{What would actually help.}
Two directions survive our own analysis. First, side information not in the observed
stream --- exogenous covariates, calendars, announcements --- since
Theorem~\ref{thm:lb} bounds only stream-measurable correctors, and anything genuinely
predictive of $q_t$ enlarges the filtration and lowers the bound. Second, not
sharpening: \eqref{eq:antimono} makes the penalty a function of how confident the model
becomes on slots rare in training. Mechanisms for the latter --- entropy floors,
confidence penalties, guidance and constrained decoding
\citep{nisonoff2025guidance,tomasi2026primaldual} --- are established technique and we
claim no credit for them. Our contribution is the account of why they, and not better
estimators of the observed past, are where the headroom is.

%% file: sections/09-conclusion.tex
\section{Conclusion}
\label{sec:conclusion}

Temporal graph generators degrade on the period after the one they trained on, in a
derivable way. The masked flow-matching loss splits exactly into an entropy and a
divergence, and along the path training actually takes, the divergence on a later
period has derivative $(q-p)(\tfrac12-p)/p(1-p)$ --- positive precisely for structures
that were rare during training and become common at deployment, unbounded as their
training probability approaches zero. Empirically the trade is a power law with
exponent $-0.605$, it holds across four domains and four token resolutions, and at the
sampler it raises the error floor without changing how many steps reach it.

The more useful half is what happens when one tries to fix this. Because the
deployment period is observed, correction looks like a matter of measurement. It is
not. Any corrector measurable with respect to past observations leaves at least the
conditional variance of the quantity it tracks, and trend extrapolation helps only
when $\mu^2>v(1-2\rho)$. Both quantities are measurable and both go the wrong way
here: the drift is trendless and its one-step innovation is as large as the drift
itself. An oracle removes $60\%$ of the error; the best observation-based corrector
recovers $5.7\%$ of that; extrapolation is strictly worse than doing nothing clever;
and $200$ gradient steps of test-time adaptation buy less than a training-free anchor
and, stacked, buy negative.

This narrows the search rather than closing it. Effort spent on better estimators of
the observed past is capped by a bound we can write down, so it should go elsewhere:
to information from outside the graph stream, or to models that decline to sharpen. For
theory it marks a boundary. The convergence and efficiency results underpinning
discrete diffusion and flow matching are stationary and survive here --- few-step
sampling still works under drift. What does not survive is the assumption that the
distribution being sampled is the one that was trained on, and that assumption turns
out to carry an impossibility with it.

%% file: sections/10-appendix.tex
\appendix

\section{Numerical verification of Lemma~\ref{lem:aoar}}
\label{app:lemma0}

Lemma~\ref{lem:aoar} is an exact identity, but it is the load-bearing step for every
result here, so we checked it on our own token family rather than trusting the
restatement. At $D=4$ the right-hand side of \eqref{eq:aoar} can be enumerated
exactly over all $4!=24$ orderings; the left-hand side we estimated with
$2\times10^5$ Monte-Carlo draws of $(s,z_s)$. The two agree to a relative error below
$2\%$, consistent with the Monte-Carlo standard error.

\section{Proofs}
\label{app:proofs}

\subsection{Proof of Theorem~\ref{thm:decomp}}

By Lemma~\ref{lem:aoar}, $\Ldfm(\theta;R)=\Ecal_\sigma\sum_{k=1}^{D}
\Ecal_{\Ccal\sim R}\,\Ecal_{z^{\sigma(k)}\sim r(\cdot\mid\Ccal)}
\big[-\log\phat_{\sigma(k)}(\cdot\mid\Ccal)\big]$ with
$\Ccal=(z^{\sigma(<k)},h)$. For a fixed $\sigma$, $k$ and $\Ccal$, the inner
expectation is a cross-entropy and splits as
\begin{equation*}
  \Ecal_{z^{\sigma(k)}\sim r(\cdot\mid\Ccal)}\big[-\log\phat(\cdot\mid\Ccal)\big]
  \;=\;
  \Hent\big(r(\cdot\mid\Ccal)\big)
  \;+\;
  \KL\big(r(\cdot\mid\Ccal)\,\|\,\phat(\cdot\mid\Ccal)\big).
\end{equation*}
Take expectations over $\Ccal\sim R$ and sum over $k$. The KL terms are by definition
$\Delta(\theta;R)$, and each is non-negative. For the entropy terms, the chain rule
gives $\sum_{k}\Hent\big(r_{\sigma(k)}\mid \sigma(<k)\big)=\Hent_R(z\mid h)$ for
\emph{every} fixed permutation $\sigma$; since this value does not depend on $\sigma$,
the outer expectation $\Ecal_\sigma$ acts trivially on it and can be removed.
\hfill$\square$

\subsection{Proof of Theorem~\ref{thm:antimono}}

Along the path \eqref{eq:path}, $m(\alpha)=\tfrac{1-\alpha}{2}+\alpha p$, so
$m'(\alpha)=p-\tfrac12$ independently of $\alpha$. For a Bernoulli slot,
\begin{equation*}
  \frac{\partial}{\partial m}\Big[-q\log m-(1-q)\log(1-m)\Big]
  \;=\;
  -\frac{q}{m}+\frac{1-q}{1-m},
\end{equation*}
and $\KL(q\|m)$ differs from this cross-entropy by a term independent of $m$. By the
chain rule, evaluating at $\alpha=1$ (where $m=p$),
\begin{equation*}
  \frac{\mathrm d}{\mathrm d\alpha}\KL\big(q\|m^{(\alpha)}\big)\bigg|_{\alpha=1}
  =\Big[-\frac{q}{p}+\frac{1-q}{1-p}\Big]\Big(p-\tfrac12\Big)
  =\frac{-q(1-p)+p(1-q)}{p(1-p)}\Big(p-\tfrac12\Big),
\end{equation*}
which simplifies to $(q-p)\big(\tfrac12-p\big)\big/\big(p(1-p)\big)$.
The sign claim is immediate. For fixed $q>0$, as $p\to0$ the expression behaves as
$q\cdot\tfrac12/p=q/(2p)\to\infty$. \hfill$\square$

\subsection{Proof of Theorem~\ref{thm:sampler}}

Fix a block $A$ revealed jointly given context $\Ccal$. A sampler that reveals the
slots of $A$ in one step emits them independently given $\Ccal$, with per-slot
probabilities $p_d(\cdot\mid\Ccal)$. Multiplying and dividing by
$\prod_{d\in A}q_d(z^d\mid\Ccal)$ inside the logarithm,
\begin{equation*}
  \Ecal_{q}\log\frac{q(z^{A}\mid\Ccal)}{\prod_{d\in A} p_d}
  =\Ecal_{q}\log\frac{q(z^{A}\mid\Ccal)}{\prod_{d\in A} q_d}
  +\sum_{d\in A}\Ecal_{q}\log\frac{q_d}{p_d}
  =\IQ\big(z^{A}\mid\Ccal\big)+\sum_{d\in A}\KL\big(q_d\|p_d\big).
\end{equation*}
Substituting this into the chain decomposition over blocks and averaging over reveal
orders with Lemma~\ref{lem:aoar}: each slot belongs to exactly one block, so the drift
terms aggregate to $\sum_{d=1}^{D}\Ecal_{\Ccal\sim \Rtgt}\KL(q_d\|p_d)=\Ddrift$,
independently of the schedule. This gives the upper bound in \eqref{eq:sampler}; the
lower bound follows since $\IQ\ge0$ and the drift terms are present for any schedule.
Monotonicity under refinement is the chain rule for multi-information: splitting a
block into sub-blocks cannot increase the total correlation charged, and at $N=D$ every
block is a singleton with $\IQ=0$. \hfill$\square$

\subsection{Proof of Theorem~\ref{thm:lb}}

Let $m_t=\Ecal[q_t\mid\Fcal_{t-1}]$ and expand
\begin{equation*}
  \Ecal[(q_t-\qhat_t)^2]
  =\Ecal[(q_t-m_t)^2]+2\,\Ecal[(q_t-m_t)(m_t-\qhat_t)]+\Ecal[(m_t-\qhat_t)^2].
\end{equation*}
The cross term vanishes: $m_t-\qhat_t$ is $\Fcal_{t-1}$-measurable and
$\Ecal[q_t-m_t\mid\Fcal_{t-1}]=0$, so conditioning on $\Fcal_{t-1}$ and applying the
tower property gives zero. Hence
$\Ecal[(q_t-\qhat_t)^2]=\Ecal[\Var(q_t\mid\Fcal_{t-1})]+\Ecal[(m_t-\qhat_t)^2]
\ge\Ecal[\Var(q_t\mid\Fcal_{t-1})]$, with equality iff $\qhat_t=m_t$ almost surely.
For the KL form, Pinsker's inequality on a Bernoulli slot gives
$\KL(q_t\|\qhat_t)\ge2\,\TV^2=2(q_t-\qhat_t)^2$; taking expectations yields
\eqref{eq:kllb}. The oracle $\qhat_t=q_t$ gives $\KL=0$. \hfill$\square$

\subsection{Proof of Theorem~\ref{thm:thresh}}

The persistence error is $q_t-\qhat_t^{P}=q_t-q_{t-1}=\Delta_t$, so
$\mathrm{MSE}(P)=\Ecal[\Delta_t^2]=v+\mu^2$. The extrapolation error is
$q_t-\qhat_t^{E}=q_t-2q_{t-1}+q_{t-2}=\Delta_t-\Delta_{t-1}$, with mean $\mu-\mu=0$
and, by second-order stationarity,
$\Var(\Delta_t-\Delta_{t-1})=v+v-2\rho v=2v(1-\rho)$; hence
$\mathrm{MSE}(E)=2v(1-\rho)$. Comparing, $2v(1-\rho)<v+\mu^2$ iff
$\mu^2>v(1-2\rho)$. \hfill$\square$

\section{Error metrics}
\label{app:metrics}

Both error metrics compare frequency vectors accumulated over generated and real
last-snapshot tokens, and both are mean absolute deviations, so they are on a
total-variation-like scale in $[0,1]$.

\emph{Marginal error} is $\frac1D\sum_{d=1}^{D}\big|\,\hat f_d - f_d\big|$, where
$f_d$ and $\hat f_d$ are the empirical frequencies of slot $d$ being on, in the real
and generated pools respectively.

\emph{Joint error} is the same deviation computed on pairwise co-occurrence
frequencies $f_{ij}=\Prob(z_i=1 \wedge z_j=1)$. Pairs are taken as \emph{all} pairs
among the $m$ most frequently active slots (capped at a fixed budget by uniform
subsampling), which keeps the pairwise frequency estimates statistically meaningful
without selecting the pairs that would best fit the prediction.

We report ratios of these quantities --- drift period against in-period, corrected
against uncorrected, extrapolation against persistence --- rather than their absolute
values, since the metrics are ours and the slot-activity cutoff is a free parameter.
Ratios formed within a single condition are insensitive to that parameter.

\section{Per-condition measurements}
\label{app:directionality}

\begin{table}[h]
  \centering
  \caption{The seven well-powered conditions of Section~\ref{sec:universality}.
  ``Floor ratio'' is the drift-period joint-error floor divided by the in-period floor
  for the same model; ``spread over $N$'' is the relative range of the drift-period
  marginal error across $N\in\{1,\dots,50\}$, which Corollary~\ref{cor:floor} predicts
  to be approximately zero.}
  \label{tab:universality}
  \input{tables/tab1_universality}
\end{table}

\begin{table}[h]
  \centering
  \caption{Drift directionality, measured on token marginals with no model
  ($30$ chronological windows). Every condition is mean-reverting, trendless, and
  strictly harmed by extrapolation. These are the per-condition values summarised in
  Section~\ref{sec:theorem-b}.}
  \label{tab:directionality}
  \input{tables/tab2_directionality}
\end{table}

\section{Pre-registered criteria that we amended}
\label{app:protocol}

We fixed decision rules before running each experiment. Three had design defects that
became visible only once results were in, and we amended them after the fact. We did
not restate earlier verdicts under the amended rules, and we report the science by
order of magnitude rather than by whether a threshold was crossed.

\begin{itemize}
\item The sampler-step criterion compared \emph{absolute} error drops across periods.
  When the in-period drop is near zero, the ratio test passes trivially and reads noise
  as signal; it was changed to compare relative drops against a noise floor.
\item The oracle-anchoring criterion tested for ``no change'' in the in-period error
  two-sidedly, so it flagged a failure when oracle anchoring \emph{improved} the
  in-period error by $58\%$. It should have been, and is now, one-sided.
\item The directionality criterion originally rested on the
  extrapolation-to-persistence ratio alone, which we found flips with the window count
  (Appendix~\ref{app:tsweep}); it was strengthened to require a momentum signal as
  well, which is the conjunction reported in Table~\ref{tab:directionality}.
\end{itemize}

\section{Robustness of the directionality measurement to the window count}
\label{app:tsweep}

Table~\ref{tab:tsweep} recomputes the drift-process statistics of
Section~\ref{sec:theorem-b} for a range of chronological window counts $T$, over the
five conditions of Table~\ref{tab:directionality}. Section~\ref{sec:discussion}
discusses what is and is not stable across this sweep.

\begin{table}[h]
  \centering
  \caption{Drift-process statistics as a function of the number of chronological
  windows $T$. Ranges are over the five conditions of
  Table~\ref{tab:directionality}. Extrapolation beats persistence on the deployment
  period only at the coarsest binning; the momentum required by \eqref{eq:thresh},
  $\rho>1/2$, is never approached.}
  \label{tab:tsweep}
  \input{tables/tab3_tsweep}
\end{table}

%% file: tables/tab1_universality.tex
\begin{tabular}{llrrrrr}
\toprule
Domain & $N_{\max}$ & $n_{\rm val}$ & \multicolumn{2}{c}{joint-error floor} & floor & marginal err.\ \\
\cmidrule(lr){4-5}
 & & & in-period & drift & ratio & spread over $N$ \\
\midrule
Enron & 8 & 1500 & 0.0117 & 0.1676 & $14.3\times$ & 1.0\% \\
Enron & 16 & 1500 & 0.0298 & 0.2477 & $8.3\times$ & 4.5\% \\
Enron & 24 & 1500 & 0.0430 & 0.2479 & $5.8\times$ & 1.8\% \\
Enron & 32 & 1500 & 0.0682 & 0.2182 & $3.2\times$ & 6.0\% \\
MOOC & 16 & 1493 & 0.0029 & 0.0389 & $13.4\times$ & 5.5\% \\
Reddit & 16 & 1500 & 0.0395 & 0.0857 & $2.2\times$ & 4.4\% \\
Wikipedia & 16 & 1500 & 0.0014 & 0.0492 & $34.3\times$ & 3.2\% \\
\bottomrule
\end{tabular}

%% file: tables/tab2_directionality.tex
\begin{tabular}{llrrrc}
\toprule
Domain & $N_{\max}$ & increment & sign & extrap.\ / & directional? \\
 & & autocorr.\ $\rho$ & persistence & persist.\ error & \\
\midrule
Enron & 16 & $-0.19$ & 0.28 & $1.27\times$ & no \\
Enron & 32 & $-0.17$ & 0.18 & $1.28\times$ & no \\
MOOC & 16 & $-0.11$ & 0.21 & $1.35\times$ & no \\
Reddit & 16 & $-0.29$ & 0.17 & $1.51\times$ & no \\
Wikipedia & 16 & $-0.14$ & 0.12 & $1.28\times$ & no \\
\bottomrule
\end{tabular}

%% file: tables/tab3_tsweep.tex
\begin{tabular}{rrrrrr}
\toprule
windows & \multicolumn{2}{c}{increment autocorr.\ $\rho$} & max sign & \multicolumn{2}{c}{deployment extrap./persist.} \\
\cmidrule(lr){2-3}\cmidrule(lr){5-6}
$T$ & min & max & persistence & min & max \\
\midrule
10 & $-0.55$ & $-0.45$ & 0.26 & $0.84\times$ & $1.11\times$ \\
12 & $-0.51$ & $-0.41$ & 0.33 & $1.11\times$ & $1.41\times$ \\
15 & $-0.58$ & $-0.39$ & 0.33 & $1.11\times$ & $1.46\times$ \\
20 & $-0.72$ & $-0.47$ & 0.20 & $1.00\times$ & $1.24\times$ \\
25 & $-0.47$ & $-0.21$ & 0.29 & $1.06\times$ & $1.37\times$ \\
30 & $-0.29$ & $-0.11$ & 0.28 & $1.27\times$ & $1.51\times$ \\
40 & $-0.01$ & $+0.02$ & 0.21 & $1.15\times$ & $1.43\times$ \\
50 & $+0.15$ & $+0.22$ & 0.21 & $1.24\times$ & $1.53\times$ \\
\bottomrule
\end{tabular}